\theoremstyle{definition}
\newtheorem{definition}{Definition}[section]
\newtheorem{proposition}{Proposition}
\date{}
\newcommand{\ke}[1]{{\color{black}#1}}
\newcommand{\etalke}{\textit{et al. }}
\newcommand{\ryn}[1]{\textcolor[rgb]{0,0,0}{#1}}
\newcommand{\daoye}[1]{\textcolor[rgb]{0,0,0}{#1}}
\newcommand{\yq}[1]{\textcolor[rgb]{0,0,0}{#1}}
\newcommand{\tabincell}[2]{\begin{tabular}{@{}#1@{}}#2\end{tabular}}
\ificcvfinal\pagestyle{empty}\fi
\begin{document}

\title{Dual Student: Breaking the Limits of the Teacher in Semi-supervised Learning}

\author{
Zhanghan Ke\textsuperscript{1,2}
\thanks{kezhanghan@outlook.com}
\and Daoye Wang\textsuperscript{2} \and Qiong Yan\textsuperscript{2} \and Jimmy Ren\textsuperscript{2} \and Rynson W.H. Lau\textsuperscript{1} \\
\and
\textsuperscript{1} City University of Hong Kong \and \textsuperscript{2} SenseTime Research \\
}

\maketitle

\begin{abstract}
    \ryn{Recently,} consistency-based methods have achieved state-of-the-art results
    in semi-supervised learning (SSL). These methods always \yq{involve} two roles, 
    an explicit or implicit teacher model and a student model, and \ryn{penalize} 
    predictions \yq{under} different perturbations by a consistency constraint. However, 
    the weights of these two roles are tightly coupled since the teacher is 
    essentially an exponential moving average (EMA) of the student. In this work, 
    we show that the coupled EMA teacher causes a performance bottleneck. 
    To address this problem, we introduce Dual Student, which 
    replaces the teacher with another student. We also define a novel concept, 
    stable sample, following which a stabilization constraint is designed for 
    our structure to be trainable. \ryn{Further, we discuss two variants of our method, which produce even higher performance}. Extensive experiments show 
    that our method 
    \ke{\ryn{improves} the classification performance significantly} on several main SSL 
    benchmarks. Specifically, \ryn{it reduces} the error rate \ke{of the 13-layer CNN} from 16.84\% to 12.39\% 
    on CIFAR-10 with 1k labels and \ryn{from} 34.10\% to 31.56\% on CIFAR-100 with 10k labels. In addition, our method also \ryn{achieves} a clear improvement in 
    domain adaptation.
\end{abstract}
\vspace{-2mm}

\section{Introduction}
Deep supervised learning has gained significant success in computer vision tasks, 
which leads the community to challenge larger and more complicated datasets like 
ImageNet\,\cite{ImageNet} and WebVision\,\cite{WebVision}. However, obtaining full 
labels for a huge dataset is usually a \ryn{very costly task.} 
\ryn{Hence}, more attention is now drawn on deep semi-supervised 
learning (SSL). 
In order to utilize unlabeled data, many methods in traditional machine learning 
have been proposed\,\cite{SSL_Survey},
and some of them are 
successfully adapted \daoye{to} deep learning. \ryn{In addition}, some latest techniques like 
self-training\,\cite{TriNet} and Generative Adversarial Networks
(GANs)\,\cite{Un_Semi_GAN, SemiGAN, AuxiliaryGAN} have been utilized for deep SSL with 
promising results. A primary track of recent deep semi-supervised methods  
\cite{LadderNetwork, Temporal_Pi, AdvExample, MeanTeacher} can be summarized 
as consistency-based methods. In \ryn{this type of} methods, two roles are commonly created\ryn{, either explicitly or implicitly: a teacher model and a student model (i.e., a Teacher-Student structure).}
The teacher guides the student to approximate its 
performance under perturbations.
The perturbations could come from the noise of \ryn{the} input or the dropout 
layer \,\cite{Dropout}, {\it etc.} \ryn{A consistency constraint is then} imposed on the 
predictions between two roles, \ryn{and forces the} unlabeled data to meet 
the {\it smoothness assumption} \ke{of semi-supervised learning}.


\begin{figure}[t]
\centering
    \includegraphics[width=0.9\linewidth]{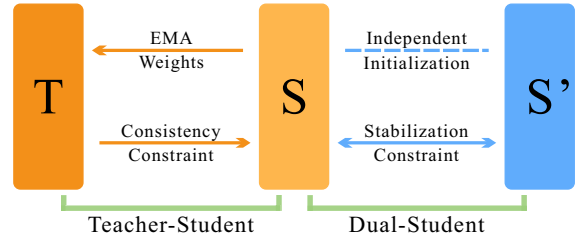}
\vspace{-0.2cm}
\caption{
\ryn{Teacher-Student versus Dual Student. The teacher (T) in Teacher-Student is an EMA of the student (S), imposing a consistency constraint on the student. Their weights are tightly coupled. In contrast, a bidirectional stabilization constraint is applied between the two students (S and S') in Dual Student. Their weights are loosely coupled.}
%
}
\vspace{-0.3cm}
\label{fig:insight}
\end{figure}

The teacher in the Teacher-Student structure can be summarized as being generated by
an exponential moving average (EMA) of the student model.
In the VAT Model\,\cite{VAT} and the $\Pi$ Model\,\cite{Temporal_Pi},
the teacher shares the same weights as the student,
which is equivalent to \ryn{setting} the averaging coefficient to zero.
The Temporal Model \cite{Temporal_Pi} is similar to $\Pi$ Model
except that it also applies an EMA to accumulate the historical predictions. 
The Mean Teacher \cite{MeanTeacher} applies an EMA to the student to 
obtain an ensemble teacher. In this work, we show that the two roles 
in the Teacher-Student structure are tightly coupled and the degree of the
coupling increases as the training goes on. This phenomenon leads to a 
performance bottleneck 
since a coupled EMA teacher is not sufficient for the student.

To overcome this problem, the knowledge coming from another independent model should help. 
\ryn{Motivated by this observation}, we replace the EMA teacher by another student model. The two students start from different initial states 
and are
optimized through individual paths during training. Hence, their weights will 
not be tightly coupled and each learns its own knowledge. What remains unclear is how to 
extract and exchange knowledge between the students.
Naively, adding a consistency constraint \ryn{may lead to the two} models collapsing into each other. 
Thus, we define the {\it stable sample} and 
propose a stabilization constraint 
\ke{for} 
effective knowledge \ryn{exchange}.
Our method improves the performance \ke{significantly} 
on several main SSL benchmarks. 
\ke{Fig.\,\ref{fig:insight} demonstrates the Teacher-Student structure and our Dual Student structure.}

In summary, the main contributions of this work include:
\begin{itemize}[itemsep=-2pt]
    \item We demonstrate that the coupled EMA teacher 
    causes a performance bottleneck of the existing Teacher-Student 
    methods.
    \item We define the {\it stable samples} of a model and propose a novel 
    stabilization constraint between models. 
    \item We propose a new SSL structure, Dual Student, and \ryn{discuss two variants of Dual Student with higher performances}.
    \item Extensive experiments are conducted to evaluate the performance 
    of our method on several benchmarks and in different tasks.
\end{itemize}

\section {Related Work}

\subsection{Overview}
Consistency-based SSL methods are derived from the network noise 
regularization\,\cite{NoiseNorm}.
Goodfellow \etalke\,\cite{AdvExample} first showed the advantage of 
adversarial noise \ryn{over} random noise. 
Miyato \etalke\,\cite{VAT} further explored this idea for unlabeled 
data and generated virtual adversarial samples for the implicit 
teacher, while Park \etalke\,\cite{VDT} proposed a 
virtual adversarial dropout based on \cite{Dropout}. In addition 
to noise, the quality of targets for the consistency constraint is also vital 
in this process. Bachman \etalke\,\cite{PseudoEnsembles} and 
Rasmus \etalke\,\cite{LadderNetwork} showed the effectiveness of 
regularizing the targets. \ryn{Laine \etalke then proposed the} internally consistent 
$\Pi$ Model and Temporal Model in\,\cite{Temporal_Pi}. Tarvainen 
took advantage of averaging model weights\,\cite{WeightAverage} 
to obtain an explicit ensemble teacher \cite{MeanTeacher}
for generating targets.
Some works derived from the traditional methods also improve the 
consistency-based SSL. Smooth Neighbor by Luo \etalke\,\cite{SmoothNeighbor} 
utilized the connection between data points and built a neighbor graph to 
cluster data more tightly. 
Athiwaratkun \etalke\,\cite{FastSWA} modified 
the stochastic weight averaging (SWA)\,\cite{SWA} to \ryn{obtain} a stronger 
ensemble teacher faster. Qiao \etalke\,\cite{DeepCoTrain} proposed 
Deep Co-Training\,\cite{DeepCoTrain}, 
\ryn{by adding} a consistency constraint between independent models.

\subsection{Teacher-Student Structure}
The most common structure of recent SSL methods is the Teacher-Student structure. 
It applies a consistency constraint between a teacher model and a 
student model to learn knowledge from unlabeled data.
Formally, we assume that a dataset $\mathcal{D}$ consists of 
an unlabeled subset and a labeled subset. 
Let $\theta^{'}$ denote \ryn{the}
weights of the teacher, and $\theta$ denote \ryn{the} weights of the student. 
The consistency constraint is defined as:
\begin{equation}\label{eq:cons_loss}
  \mathcal{L}_{con} = 
  \mathbb{E}_{x \in \mathcal{D}}\;\mathcal{R}(f(\theta, x+\zeta),\;\mathcal{T}_x)\,,
\end{equation}
where $f(\theta, x+\zeta)$ is the prediction from model $f(\theta)$ for input $x$ with 
noise $\zeta$. $\mathcal{T}_x$ is the consistency target from the teacher. 
$\mathcal{R}(\cdot,\,\cdot)$ measures the distance between two vectors, \ryn{and} is usually 
set to mean squared error (MSE) or KL-divergence. 
Previous works have proposed several ways to generate $\mathcal{T}_x$.

\textbf{$\Pi$ Model:}
In $\Pi$ Model, the implicit teacher shares parameters with the student. 
It forwards a sample $x$ twice with different random noise $\zeta$ and $\zeta'$ in 
each iteration, and treats the prediction of $x+\zeta'$ as $\mathcal{T}_x$.

\textbf{Temporal Model:}
While $\Pi$ Model needs to forward a sample twice in each iteration,  
\ryn{Temporal Model reduces this} computational overhead by using EMA to accumulate 
the predictions over epochs as $\mathcal{T}_x$. This approach could 
reduce the \ryn{prediction variance} and stabilize the training process. 

\textbf{Mean Teacher:}
Temporal Model needs to store a record for each sample\ryn{, and}
the target $\mathcal{T}_x$ gets updated only once per epoch while 
the student is updated multiple times.
\ryn{Hence, Mean Teacher defines} an explicit teacher by an EMA of the student 
and update its weights in each iteration \yq{before generating} $\mathcal{T}_x$. 

\textbf{VAT Model:}
Although random noise is effective
in previous methods, VAT Model adopts 
the adversarial noise \ryn{to generate} better $\mathcal{T}_x$ for the consistency 
constraint.

\subsection{Deep Co-Training}
It is known that fusing knowledge from multiple models could improve 
performance \ryn{in SSL \cite{SSLEnsemble}}. However, directly adding the 
consistency constraint between models results in \ryn{the} models collapsing 
into each other. 
Deep Co-Training addressed this issue \ryn{by} utilizing the {\it Co-Training assumption} 
from the traditional Co-Training algorithm\,\cite{CoTrain}. 
It treats the features from the convolutional layers as a view of the
input and uses the adversarial samples from other collaborators to ensure \ryn{that 
view differences exist} among the models. 
\ryn{Consistent predictions can then be used for training.} 
However, this strategy requires \ryn{generating} adversarial samples of each 
model in the whole process, which is complicated and time-consuming.

Our method also has interactions between models to break the limits of the 
EMA teacher, but there are two major differences between our method and 
Deep Co-Training. First, instead of \ryn{enforcing the consistency constraint \ke{and the different-views constraint},}
we only extract reliable knowledge of the models and exchange them by a more 
effective stabilization constraint. Second, our method is more efficient 
since we do not need the adversarial samples.

\section{Limits of the EMA Teacher}
One fundamental assumption in SSL is the {\it smoothness assumption - \say{If two data points 
in a high-density region are close, then so should be the corresponding outputs}} \cite{MIT_SSL}.
All existing Teacher-Student methods utilize unlabeled data according to 
this assumption. In practice, if $x$ and $\bar{x}$ are generated from a sample with 
different small perturbations,
they should have consistent predictions by the corresponding teacher and student. 
Previous methods achieve this by the consistency constraint and have mainly focused on generating more meaningful targets through ensemble or well-designed noise. 

However, previous works neglect that the teacher is essentially an EMA of the student\ryn{. Hence,} their 
weights are tightly coupled. Formally, the teacher weights $\theta^{'}$ are an 
ensemble of the student weights $\theta$ in a successive training step $t$ with a smoothing 
coefficient $\alpha \in [0, 1]$:
\begin{equation}\label{eq:ema_iter}
  \theta^{'}_{t} = \alpha\,\theta^{'}_{t-1} + (1 - \alpha)\,\theta_{t}\,.
\end{equation}
\ryn{In $\Pi$ Model and VAT Model, as $\alpha$ is set to zero, $\theta^{'}$ is equal to $\theta$.}
Temporal Model improves $\Pi$ 
Model by an EMA on historical predictions, but its teacher still shares weights 
with the student. 
As for Mean Teacher, the updates of the student weights decreases as 
the model converges, i.e., $|\theta_t - \theta_{t-1}|$ becomes smaller and smaller  
as \ryn{the number of training steps} $t$ increases. Theoretically, it can be proved that the EMA of a converging sequence converges to the same limit as the sequence,  \ke{which is shown in Appendix A (Supplementary)}. 
Thus, the teacher will be very close to the student when 
the training process converges.
In all the above cases, the coupling fact between the teacher and the student is obvious.


To further visualize it, we train two structures on the CIFAR-10 SSL benchmark. 
One contains a student and an EMA teacher (named $S_{ema}$) while 
the other contains two independent models (named $S_{split}$). \ryn{We then} 
calculate the Euclidean distance of \ryn{the} weights and
predictions between the two models in each structure. 
\ryn{Fig.\,\ref{fig:distance} shows the results.} 
As expected, 
the EMA teacher in $S_{ema}$ is very 
close to the student, and their distance approaches zero with increasing epochs. 
In contrast, 
the two models in $S_{split}$ always keep 
a 
larger distance from each other. 
These results confirm our conjecture that 
the EMA teacher is tightly coupled with the student.
In addition, they also demonstrate 
that the two independent models are loosely coupled.

\begin{figure}[t]
\centering
    \includegraphics[width=0.99\linewidth]{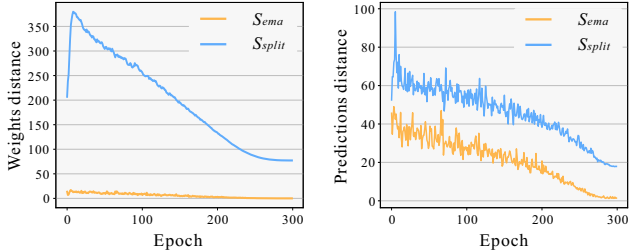}
    \vspace{-0.2cm}
\caption{Left: $S_{ema}$ contains two models with similar weights, 
         while \ryn{the} weights of the two models in $S_{split}$ keep a certain distance. 
         Right: \ryn{The predictions of the two models in $S_{split}$ 
         keep a larger distance than those of $S_{ema}$.} 
}
\vspace{-0.2cm}
\label{fig:distance}
\end{figure}

\begin{figure}[t]
\centering
    \includegraphics[width=0.7\linewidth]{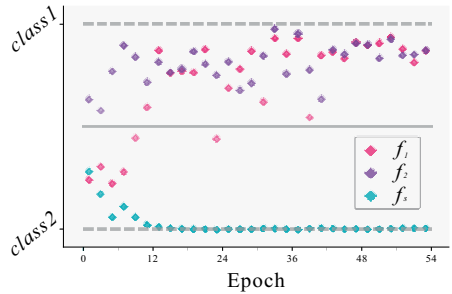}
\vspace{-0.2cm}
\caption{\ke{Our method can alleviate the confirmation bias.}
$f_1$ and $f_2$ are the independent students from our Dual Student, while  
$f_s$ is the student guided by the Mean Teacher. For a misclassified sample (\ryn{belonging} to {\it class1}), 
$f_1$ can correct it quickly with the knowledge from $f_2$. \ke{However, $f_s$ is unable to correct its prediction due to the wrong guidance from the EMA teacher.}
}
\vspace{-4mm}
\label{fig:samples}
\end{figure}

\begin{figure*}[t]
    \setlength{\abovecaptionskip}{-0.2cm}
    \begin{center}
       \includegraphics[width=0.85\linewidth]{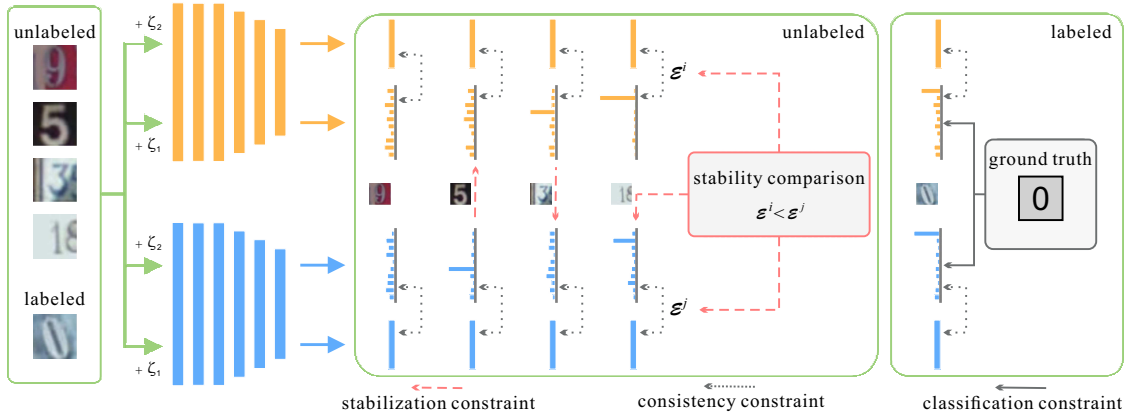}
    \end{center}
    \caption{Dual Student structure overview. We train two student models separately. 
    Each batch includes labeled and unlabeled data and is forwarded twice. The stabilization 
    constraint based on the {\it stable samples} is enforced between the students. Each student also learns 
    labeled data by the classification constraint and meets the {\it smooth assumption} by the consistency 
    constraint.
     }
     \vspace{-3mm}
    \label{fig:framework}
\end{figure*}

\ryn{Due to the coupling effect between the two roles in the existing Teacher-Student methods, the teacher} does not have more meaningful knowledge compared to the student.
\ryn{In addition}, if the student 
has \daoye{biased} predictions for specific samples, the EMA teacher is most likely to accumulate the 
mistakes and to enforce the student to follow, making the misclassification irreversible. 
\ryn{This} is a case of the confirmation bias \cite{MeanTeacher}. 
Most methods apply a ramp-up operation for the consistency constraint to alleviate the bias,
but it is inadequate to solve the problem. 
From this perspective, 
training independent models are also beneficial.
\ryn{Fig.\,\ref{fig:samples} visualizes this inability of the EMA teacher.}
Three models, $f_1$, $f_2$, and $f_s$, are trained on a two-category task simultaneously.
$f_s$ is the student from Mean Teacher. 
$f_1$ and $f_2$ are two relatively independent but interactive models, \ryn{representing} 
the two students from our Dual Student structure (\ryn{Section}\,\ref{sec:dual_student}).
\ryn{They} have the same initialization, while $f_2$ is different from them.
The plot shows how the predictions of a sample from {\it class1} changes with 
epochs for these three models, 
\ke{which demonstrates that our method can alleviate the confirmation bias.}


\section{Dual Student}\label{sec:dual_student}
As analyzed above, the targets from an EMA teacher are not adequate to guide the student 
when \ryn{the number of training steps} $t$ is large. 
Therefore, our method gains the loosely coupled targets by training two 
independent models simultaneously. 
However, \ryn{the} outputs of these two models may vary 
widely, and applying the consistency constraint directly will \ryn{cause them to} collapse into each other by exchanging the wrong knowledge. 
The EMA teacher does not suffer from this problem 
\ke{due to the coupling effect.}

We propose an efficient way to overcome this problem, 
which is to \ryn{exchange only} reliable knowledge of \ryn{the} models. 
To put this idea into practice,
\ryn{we need to solve two problems.}
One is how to define and acquire reliable knowledge of a model. Another 
is how to exchange the knowledge mutually. 
To address them, we define the {\it stable sample} in Section \ref{sec:stable_sample}
and then elaborate the derived stabilization constraint for training in Section \ref{sec:stable_constraint}.


\subsection{Stable Sample}\label{sec:stable_sample}
A model can be regarded as a decision function that can make reliable 
predictions for some 
samples but not for the others. We define the {\it stable sample} and treat it 
as the reliable knowledge of a model. 
A {\it stable sample} satisfies two conditions. First, according to the 
{\it smoothness assumption}, a small perturbation should not affect the 
prediction of this sample, 
i.e., the model should be smooth in the neighborhood of 
this sample. 
\yq{Second, the prediction of this sample \ryn{is far} from the decision boundary\ryn{. This means that this sample has a high probability for the predicted label}.}

\theoremstyle{definition}\label{def:stable_samples}
\begin{definition}[{\it Stable sample}]
Given a constant $\xi \in [0,1)$, 
a dataset $\mathcal{D}\subseteq\mathbb{R}^m$ that satisfies the {\it smoothness assumption} and a model $f:\mathcal{D}\to[0,1]^n$ that satisfies $||f(x)||_{1}=1$ for all  
$x\,\in\,\mathcal{D}$, $x$ is a {\it stable sample} with respect to $f$ if:
\begin{enumerate}
\itemsep0em 
    \item $\forall \bar{x} \in \mathcal{D}$ near $x$, their predicted labels are the same.
    \item $x$ satisfies the inequality: $||f(x)||_{\infty}  > \xi$\;. \footnote{\footnotesize
    $||\,\mathbf{a}\,||_{1} := \sum\limits_{i=1}^n |\,a_i\,| $,\;
    $||\,\mathbf{a}\,||_{\infty} := \max\limits_{i=1..n} |\,a_i\,| $,\; 
    $\mathbf{a} = (a_1, a_2, ..., a_n)$ }

\end{enumerate}
\end{definition}
\ryn{Def.\,\ref{def:stable_samples} defines the {\it stable sample}, and 
Fig.\,\ref{fig:assumption} illustrates its conditions} in details.
Notice that the concept of the {\it stable sample} is specific to \ryn{the} models.
A data point $x$ can be stable with respect to \ryn{any one} model
but may not \ryn{be to the} others. This fact is a key to our stabilization constraint, \ryn{and will be elaborated in Section \ref{sec:stable_constraint}}.
\ryn{In addition to the criterion of} whether a sample point $x$ is stable 
or not, we would \ryn{also} like to know the degree of stability of a 
{\it stable sample} $x$. This can be reflected by \ryn{the} prediction consistency in its 
neighborhood. The more consistent the predictions are, the more stable $x$ is.

\subsection{Training by \ryn{the} Stabilization Constraint}\label{sec:stable_constraint}
We briefly introduce Dual Student structure before explaining the details on training.
\ryn{It contains two independent student models, which} share the same network architecture with different initial states and are updated 
separately (Fig.\,\ref{fig:framework}).
\ryn{For our structure to be} trainable, we derive a novel stabilization constraint from
the {\it stable sample}.

In practice, we only utilize two close samples to approximate the conditions of 
the {\it stable sample} to reduce \ryn{the} computational overhead.
Formally, we use $\theta^{i}$ \ryn{and $\theta^j$ to represent weights of the two students. \ryn{We first} define a boolean function ${\{condition\}}_{\textbf{1}}$, which outputs 1 
when the condition is true and 0 otherwise.} 
Suppose $\bar{x}$ is a noisy augmentation of a sample $x$. 
\ryn{We then check whether $x$ is a {\it stable sample} for student $i$}:
\begin{equation}\label{eq:stabilization_judgement}
\begin{split}
    &\mathcal{R}^{i}_x=
    {\{ \mathcal{P}^{i}_{x} = \mathcal{P}^{i}_{\bar{x}} \}}_{\textbf{1}}\,
    \&\,
    ({\{\mathcal{M}^i_{x} > \xi \}}_{\textbf{1}}
    \|\,
    {\{\mathcal{M}^i_{\bar{x}}  > \xi \}}_{\textbf{1}})\;, \\
&\textrm{where} 
 \qquad \qquad \mathcal{M}^i_x = ||\,f(\theta^i,\,x)\,||_{\infty}.
\end{split}
\end{equation}
$\mathcal{P}^{i}_x$ \ryn{and} $\mathcal{P}^{i}_{\bar{x}}$ 
are the predicted labels of $x$ \ryn{and $\bar{x}$, respectively, by} student $i$. 
Hyperparameter $\xi$ is a confidence threshold in $[0, 1)$. \ryn{If the maximum prediction probability of sample $x$ exceeds $\xi$, $x$} is considered to be far enough 
from the classification boundary. \ryn{We then} use the Euclidean distance to measure the prediction consistency\ryn{,} to \yq{indicate} the stability of $x$\ryn{,} \ryn{as}: 
\begin{equation}\label{eq:stabilization_distance}
    \mathcal{E}^{i}_x\,=\,||\,f(\theta^{i},\,x) - f(\theta^{i},\,\bar{x})\,||^2\;.
\end{equation}
\ryn{A smaller $\mathcal{E}^{i}_x$ means that $x$ is more stable to student $i$.}
\ryn{The distance between the predictions of students $i$ and $j$ can be measured using the mean squared error (MSE) as:}
\begin{equation}\label{eq:stabilization_distance}
    \mathcal{L}_{mse} (x)\,=\,||\,f(\theta^{i},\,x) - f(\theta^{j},\,x)\,||^2\;.
\end{equation}
Finally, the stabilization constraint for the student $i$ on sample $x$ is written as:
\begin{equation}\label{eq:stabilization_constraint}
\mathcal{L}^{i}_{sta}(x) = \\
 \left\{\begin{aligned}
        &{\{ \mathcal{E}^{i}_x > \mathcal{E}^{j}_x \}}_{\textbf{1}}\; \mathcal{L}_{mse}(x),
        \quad \mathcal{R}^{i}_x = \mathcal{R}^{j}_x = 1, \\
        &\; \mathcal{R}^{j}_x\; \mathcal{L}_{mse}(x), \quad\quad\quad\quad\; \textrm{otherwise.}
       \end{aligned}
 \right.
\end{equation}
\ke{We calculate the stabilization constraint for the student $j$ in the same way.}
As we can see, the stabilization constraint changes 
dynamically depending on the outputs of the two students. There 
are three cases: 
(1) No constraint is applied if $x$ is unstable for both students.
(2) If $x$ is stable only for student $i$, it \ryn{can guide} student $j$. 
(3) \ryn{If} $x$ is stable for both students, the stability is 
calculated, and the constraint is applied from the more stable one to the other. 

\begin{figure}[t]
    \setlength{\abovecaptionskip}{-0.2cm}
    \begin{center}
       \includegraphics[width=0.7\linewidth]{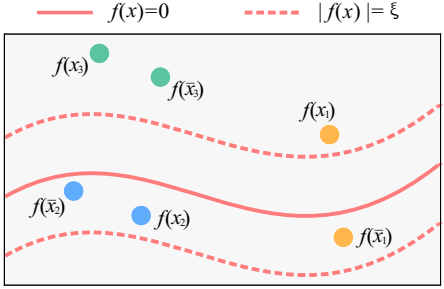}
    \end{center}
    \caption{
    Illustration of the \ryn{conditions for a {\it stable sample}}.
    \ryn{Consider} three pairs of adjacent data points:
             (1) \ryn{$x_1$ and $\bar{x}_1$ do not satisfy the 1\textsuperscript{st} condition, 
             (2) $x_2$ and $\bar{x}_2$ do not satisfy the 2\textsuperscript{nd} condition, and
             (3) $x_3$ and $\bar{x}_3$ satisfy} both conditions.}
    \label{fig:assumption}
\end{figure}

\yq{Following previous works}, our Dual Student structure also imposes the consistency constraint \ryn{in} each student 
to meet the {\it smoothness assumption}. \ryn{We also apply} the decoupled top layers trick from the Mean Teacher, which splits 
the constraints for the classification and the smoothness.

To train Dual Student, the final constraint for student $i$ is 
a combination of three parts\ryn{: the classification constraint, 
consistency constraint \ke{in each model}, and stabilization constraint \ke{between models}, as:} 
\begin{equation}\label{eq:loss}
  \mathcal{L}^i = \mathcal{L}^{i}_{cls} + \lambda_{1} \; \mathcal{L}^{i}_{con} + \lambda_{2}\; \mathcal{L}^{i}_{sta}\;,
\end{equation}
where $\lambda_{1}$ and $\lambda_{2}$ are hyperparameters to balance 
the constraints. \ryn{Algorithm\,\ref{alg::DS_SSL} summarizes the optimization process.} 

\begin{algorithm}[t]
  \caption{Training of Dual Student for SSL.}
  \label{alg::DS_SSL}
  \begin{small}
  \begin{algorithmic}[1]
    \Require
        \ryn{Batch $\mathcal{B}$ containing} labeled and unlabeled samples
    \Require
        \ryn{Two} independent models $f(\theta)$ and $f(\theta^{'})$ 

    \For {each batch $\mathcal{B}$}
            \State Get $\mathcal{B}_1$, $\mathcal{B}_2$ from $\mathcal{B}$ by data \ryn{augmentation}
            \For {each model in \{$f(\theta)$, $f(\theta^{'})$\}}
                \State Calculate $\mathcal{L}_{cls}$ on labeled samples
                \State Calculate $\mathcal{L}_{con}$ by Eq.\,\ryn{\ref{eq:cons_loss}} between $\mathcal{B}_1$ \ryn{and} $\mathcal{B}_2$
            \EndFor
            \For {each unlabeled sample $x$}
                \For {each model in \{$f(\theta)$, $f(\theta^{'})$\}}
                    \State Determine whether $x$ is stable by Eq.\,\ryn{\ref{eq:stabilization_judgement}}
                \EndFor
                \If {both $f(\theta)$ and $f(\theta^{'})$ are stable for $x$}
                    \State Calculate the stability of $x$ by Eq.\,\ryn{\ref{eq:stabilization_distance}}
                \EndIf
                \State Calculate $\mathcal{L}_{sta}$ for $f(\theta)$ and $f(\theta^{'})$ by Eq.\,\ryn{\ref{eq:stabilization_constraint}}
            \EndFor
    \State Update $f(\theta)$ and $f(\theta^{'})$ by the loss in Eq.\,\ryn{\ref{eq:loss}}
    \EndFor
  \end{algorithmic}
  \end{small}
\end{algorithm}

\begin{table*}[t]
  \begin{center}
    \caption{Test error rate on CIFAR-10 averaged over 5 runs. 
             \ryn{Parentheses show numbers of training epochs (default 300).}}
    \label{tab:CIFAR-10}
    \begin{tabular}{lllll}
      \toprule 
      \textbf{Model} & 1k labels & 2k labels & 4k labels & all labels \\
      \midrule
      $\Pi$\,\cite{Temporal_Pi} & $31.65\pm1.20^{\dagger}$ & $17.57\pm0.44^{\dagger}$ & $12.36\pm0.31$ & $5.56\pm0.10$ \\
      $\Pi$\,+\,SN\,\cite{SmoothNeighbor} & $21.23\pm1.27$ & $14.65\pm0.31$ & $11.00\pm0.13$ & $5.19\pm0.14$  \\
      \hline
      Temp\,\cite{Temporal_Pi} & $23.31\pm1.01^{\dagger}$ & $15.64\pm0.39^{\dagger}$ & $12.16\pm0.24$ & $5.60\pm0.10$ \\
      Temp\,+\,SN\,\cite{SmoothNeighbor} & $18.41\pm0.52$ & $13.64\pm0.32$ & $10.93\pm0.34$ & $5.20\pm0.14$  \\
      \hline
      MT\,\cite{MeanTeacher} & $18.78\pm0.31^{\dagger}$ & $14.43\pm0.20^{\dagger}$ & $11.41\pm0.27^{\dagger}$ & $5.98\pm0.21^{\dagger}$ \\
      MT\,+\,FSWA\,\cite{FastSWA} & $16.84\pm0.62$ & $12.24\pm0.31$ & $9.86\pm0.27$ & $\mathbf{5.14\pm0.07}$ \\
      \hline 
      CS & $17.38 \pm 0.52$ & $13.76 \pm 0.27$ & $10.24 \pm 0.20$ & $5.18 \pm 0.11$ \\    
      DS & $\mathbf{15.74 \pm 0.45}$ & $\mathbf{11.47\pm0.14}$ & $\mathbf{9.65\pm0.12}$ & $5.20\pm0.03$ \\    
      \hline                
      MT\,+\,FSWA (1200)\,\cite{FastSWA} & $15.58\pm0.12$ & $11.02\pm0.23$ & $9.05\pm0.21$ & $4.73\pm0.18$ \\
      Deep CT (600)\,\cite{DeepCoTrain} & - & - & $9.03\pm0.18$ & - \\
      DS (600) & $\mathbf{14.17\pm0.38}$ & $\mathbf{10.72\pm0.19}$ & $\mathbf{8.89\pm0.09}$ & $\mathbf{4.66\pm0.07}$ \\
      \bottomrule
    \end{tabular}
  \end{center}
  \vspace{-0.2cm}
\end{table*}

\subsection{Variants of Dual Student}
\ryn{Here, we briefly discuss two variants of Dual Student, named \emph{Multiple Student} and \emph{Imbalanced Student}}. Both of them 
have higher \ryn{performances} than the standard Dual Student. 
They do not increase the inference time, even though 
more computations are required during training. 

\textbf{Multiple Student:}
Our Dual Student \ryn{can be easily extended to} Multiple Student. 
We followed the same strategy as the Deep Co-Training.
\ryn{We assume that} our Multiple Student contains $2n$ student models. 
At each iteration, we randomly divide these students into $n$ pairs. 
\ryn{Each pair is then updated like} Dual Student. 
\ryn{Since our method does not require models to have view differences,}
the data stream can be shared among \ryn{the students. This is different from Deep 
Co-Training, which} requires an exclusive data stream for each pair.
In practice, four students \yq{($n=2$)} achieve a notable improvement over two students. 
However, \ryn{having more than four students do not further improve the performance, as demonstrated in Section~\ref{sec:perf_variants}.} 

\textbf{Imbalanced Student:}
\ryn{Since a well-designed architecture with more parameters usually has better performance, 
a pre-trained high-performance teacher can be used} to improve the light-weight student in knowledge distillation task \cite{Distill, DistillTS}.
\ke{Based on the same idea,}
we extend \ke{Dual Student} to Imbalanced Student by enhancing the capability of one student.
\ke{However, we} do not consider the sophisticated model as a teacher, since the knowledge will be exchanged mutually.
We \ryn{find} that the improvement of the weak student is proportional to the capability of the \daoye{strong} student. 

\section{Experiments}
We first evaluate Dual Student on several common SSL benchmarks, including  
CIFAR, SVHN, and ImageNet. \ryn{We then evaluate the performances of the two variants
of Dual Student}.
We \ryn{further} analyze various aspects of our method \ryn{through} ablation experiments. 
Finally, we \ryn{demonstrate the application of Dual Student in a} domain adaptation task.

\ryn{Unless specified otherwise}, the architecture used in our experiments is \ryn{a} same 13-layer convolutional neural network \ke{(CNN)}, following previous \ryn{works}\,\cite{Temporal_Pi, VAT, MeanTeacher}. 
Its details are described in \ke{Appendix B (Supplementary)}. As reported in\,\cite{SSLEval}, the implementations of recent SSL methods are not exactly same, and the training 
details 
(\ryn{e.g., number of training epochs, optimizer and augmentation}) \ryn{may also be different.} 
For a fair comparison, we implement our method following the previous state-of-the-art \cite{FastSWA}, which uses the 
standard Batch Norm\,\cite{BatchNorm} instead of the mean-only Batch Norm\,\cite{MeanOnlyBN}. 
The stochastic gradient descent optimizer is adopted with the learning rate 
adjustment function $\gamma = \gamma_0 * (0.5 + \cos((t - 1) * \pi / N))$, where 
$t$ is the current training step, $N$ is the total \ryn{number of} steps, and $\gamma_0$ is the initial learning rate. These \ryn{settings provide better baselines for} $\Pi$ Model and Mean Teacher. 
For other methods, we use the results from the original papers. 
More training details are provided in \ke{Appendix C (Supplementary)}.

\subsection{SSL Benchmarks}
\ryn{We first} evaluate Dual Student on the CIFAR benchmark, including CIFAR-10\,\cite{CIFAR-10} 
and CIFAR-100\,\cite{CIFAR-100}. CIFAR-10 has 50k training samples and 10k testing samples, \ryn{from} 10 categories. Each sample is a $32\times32$ RGB image. 
We extract 1k, 2k, and 4k balanced labels randomly. 
CIFAR-100\,\cite{CIFAR-100} is a more complex dataset \ryn{including} 100 categories.
Each \ryn{category} contains only 500 training samples\ryn{, together with 100 test} samples. We extract 
10k balanced labels from it randomly. Besides, we also run experiments 
with full labels on both datasets. We compare our Dual Student (DS) with some recent 
consistency-based models, including $\Pi$ Model ($\Pi$), Temporal Model (Temp), 
Mean Teacher (MT), Smooth Neighbor (SN), FastSWA based on Mean Teacher (MT+FSWA), and Deep Co-Training (Deep CT). 
We also replace the stabilization constraint in our structure with 
the consistency constraint (CS) as a baseline.

\begin{table}[t]
  \small
  \begin{center}
    \caption{Test error rate on CIFAR-100 averaged over 5 runs.}
    \label{tab:CIFAR-100}
     \begin{tabular}{lll}
      \toprule 
      \textbf{Model} & 10k labels & all labels \\
      \midrule
      Temp\,\cite{Temporal_Pi} & $38.65\pm0.51$ & $26.30\pm0.15$ \\
      $\Pi$\,\cite{Temporal_Pi} & $39.19\pm0.36$ & $26.32\pm0.04$ \\
      $\Pi$\,+\,FSWA\,\cite{FastSWA} & $35.14 \pm 0.71$ & $22.00\pm0.21$ \\
      \hline
      MT\,\cite{MeanTeacher} & $35.96\pm0.77^{\dagger}$ & $23.37\pm0.16^{\dagger}$ \\
      MT\,+\,FSWA\,\cite{FastSWA} & $34.10\pm0.31$ & $\mathbf{21.84\pm0.12}$ \\
     \hline
      DS & $\mathbf{33.08\pm0.27}$ & $21.90\pm0.14$ \\    
      \hline
      MT\,+\,FSWA (1200)\,\cite{FastSWA} & $33.62\pm0.54$ & $\mathbf{21.52\pm0.12}$ \\
      Deep CT (600)\,\cite{DeepCoTrain} & $34.63\pm0.14$ & - \\
      DS (480) & $\mathbf{32.77\pm0.24}$ & $21.79\pm0.11$ \\    
      \bottomrule
    \end{tabular}
  \end{center}
  \vspace{-0.4cm}
\end{table}

\ryn{Table\;\ref{tab:CIFAR-10} shows the results on CIFAR-10. All models are trained for 300 epochs, except for those specified with parentheses.
Results marked with a ${\dagger}$ are obtained from other works that published better performances than the original ones.
We can see that our} 
Dual Student boosts the performance on all semi-supervised settings. 
The results reveal that as the number of 
labeled samples decreases, our method can gain more significant improvements.
Specifically, Dual Student improves the result with 1k labels to $14.17\%$ with 
only \ryn{half of} training epochs comparing to \ke{FastSWA.} 
Similar results \ryn{can also be observed in the experiments with 2k and 4k labels.}
Fig.\,\ref{fig:stable_improve} shows that the accuracy on only the {\it stable samples} is 
higher than that on all samples, 
which proves that the {\it stable samples} represent the relatively more
reliable knowledge of a model. This justifies 
why our DS with stabilization constraint achieves much better results than the CS. 
Our result on full labels shows less advantages since 
the labels play a much more important role
in the fully supervised case. 
Table\;\ref{tab:CIFAR-100} lists the results on CIFAR-100. 
Especially, in 10k label experiments, 
Dual Student records a new state-of-the-art $32.77\%$ with less training epochs
than FastSWA and Deep Co-Training.

\begin{figure}[t]
    \setlength{\abovecaptionskip}{-0.2cm}
    \begin{center}
       \includegraphics[width=0.8\linewidth]{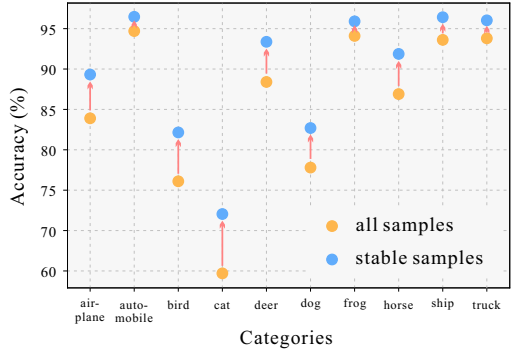}
    \end{center}
    \caption{
    Test accuracy of each category on the {\it stable samples} \ryn{and on 
    all samples of} CIFAR-10. The performance gap indicates that the {\it stable samples} represent relatively more reliable knowledge of a model. The 
    average ratio of the {\it stable samples}  on the test set is about 85\% w.r.t. the model.
    }
    \label{fig:stable_improve}
\end{figure}

\begin{table}[t]
  \begin{center}
    \caption{Test error rate on SVHN averaged over 5 runs.}
    \label{tab:svhn}
    \begin{tabular}{lllll}
      \toprule 
      \textbf{Model}                     & 250 labels               & 500 labels                \\
      \midrule
      Supervised\,\cite{MeanTeacher}      & $27.77 \pm 3.18$         & $16.88 \pm 1.30$          \\
      MT \,\cite{MeanTeacher}             & $4.35 \pm 0.50$          &  $4.18 \pm 0.27$          \\
      DS                           & $\mathbf{4.24 \pm 0.10}$ & $\mathbf{3.96 \pm 0.15}$        \\    
      \bottomrule
    \end{tabular}
  \end{center}
  \vspace{-0.2cm}
\end{table}

\begin{table}[t]
  \begin{center}
    \caption{Test error rate on ImageNet averaged over 2 runs.}
    \label{tab:imagenet}
    \begin{tabular}{lll}
      \toprule 
      \textbf{Model}                    & 10\% labels-top1            & 10\% labels-top5            \\
      \midrule
      Supervised                        & $42.15 \pm 0.09$ & $19.76 \pm 0.11$ \\
      MT\,\cite{MeanTeacher}            & $37.83 \pm 0.12$          & $16.65 \pm 0.08$          \\
      DS                                & $\mathbf{36.48 \pm 0.05}$ & $\mathbf{16.42 \pm 0.07}$ \\
      \bottomrule
    \end{tabular}
  \end{center}
  \vspace{-0.2cm}
\end{table}

To evaluate the generalization ability of Dual Student, we also conduct 
experiments \ryn{on both SVHN\,\cite{SVHN} and ImageNet\,\cite{ImageNet}}.
Street View House Numbers (SVHN) is a dataset containing 73,257 training 
samples and 26,032 testing samples. Each sample is a $32\times32$ 
RGB image with a center close-up of \ryn{a} house number. We only experiment  
with 250 and 500 labels on SVHN. ImageNet contains more than 10 million RGB 
images belonging to 1k categories. We extract 10\% balanced labels and train 
a 50-layer ResNeXt model \cite{ResNeXt}. 
\ryn{Tables\;\ref{tab:svhn} and \ref{tab:imagenet}} show 
that Dual Student could improve the results on \ryn{these} datasets of various scales.

\begin{table}[t]
  \begin{center}
    \caption{
    Test error rate of two variants of Dual Student (all using the 13-layer CNN) on 
    the CIFAR benchmark averaged over 3 runs.
    \ryn{Parentheses of Multiple Student (MS) indicate the numbers of students. Parentheses of Imbalanced Student (IS) indicate the numbers of parameters for the strong} student.}
    \label{tab:variants}
    \begin{tabular}{lll}
      \toprule 
      \textbf{Model} & \tabincell{c}{CIFAR-10\\ 1k labels} & \tabincell{c}{CIFAR-100\\ 10k labels}\\
      \midrule
      DS                           & $15.74 \pm 0.45$ & $33.08 \pm 0.27$        \\    
      \hline
      MS (4 models) & $14.97 \pm 0.36$ & $32.89 \pm 0.32$ \\
      MS (8 models) & $14.77 \pm 0.33$ & $32.83 \pm 0.28$                   \\
      \hline
      IS (3.53M params) & $13.43 \pm 0.24$ & $32.59 \pm 0.27$ \\
      IS (11.6M params) & $\mathbf{12.39 \pm 0.26}$ & $\mathbf{31.56 \pm 0.22}$ \\
      \bottomrule
    \end{tabular}
  \end{center}
\end{table}

\begin{figure}[t]
\centering
    \includegraphics[width=0.99\linewidth]{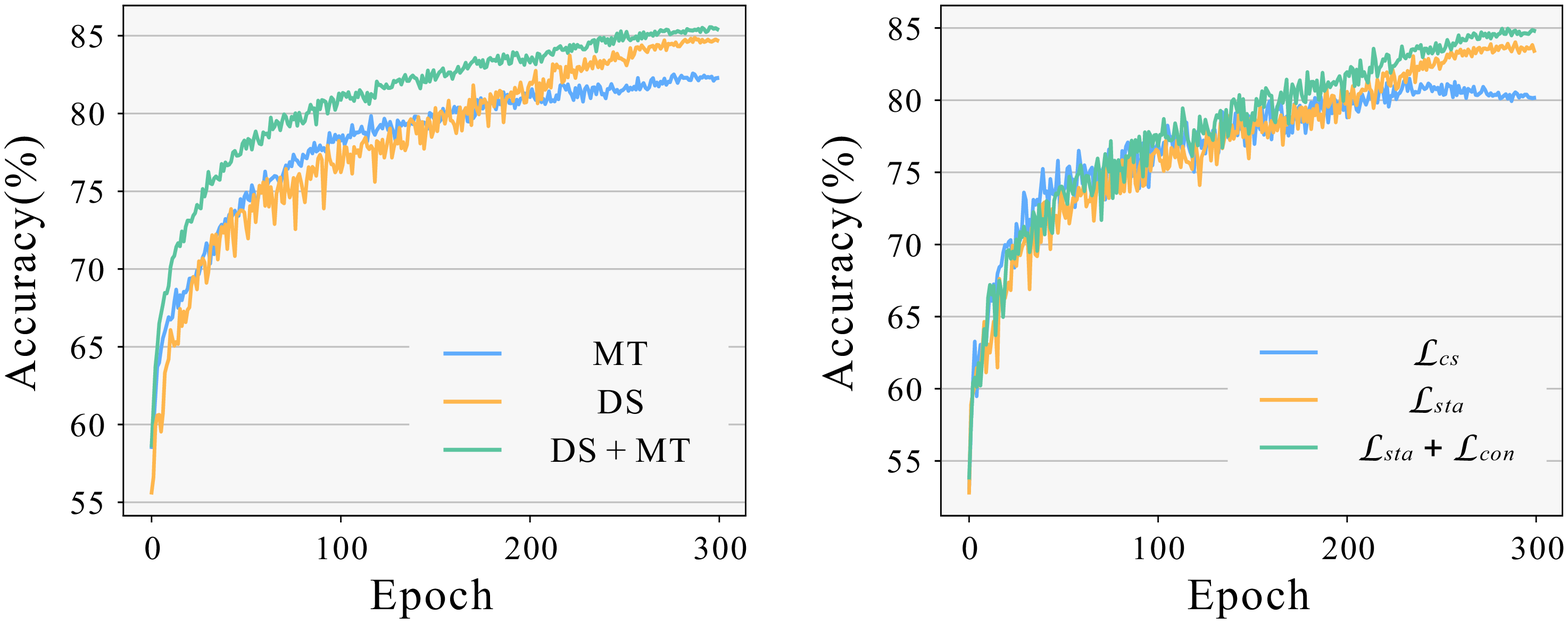}
\caption{Test accuracy on CIFAR-10 with 1k labels.
         Left: Combining our method with Mean Teacher can improve its performance. 
         Right: The effectiveness of our stabilization constraint.}
\label{fig:ablation}
\vspace{-0.2cm}
\end{figure}

\subsection{Performance of Variants}\label{sec:perf_variants}
We evaluate Multiple Student and Imbalanced Student on the CIFAR benchmark. 
Table\,\ref{tab:variants} compares them with the standard Dual Student\ryn{, 
all using} the same 13-layer CNN trained for 300 epochs. 
For Multiple Student (MS), 
we train both the four students and the eight students. The performance improvement is limited 
when more than four students are trained simultaneously. For Imbalanced Student (IS), we 
replace one student by a ResNet \cite{ResNet} with Shake-Shake regularization. \ryn{We then 
conduct the} experiments on two different model sizes. In particular,  
a small one with 3.53 million parameters and a 
large one with 11.65 million parameters. The small ResNet has almost no increase in 
\ryn{computational cost, as its number of parameters is similar to that of} the 13-layer CNN (3.13 million parameters). Imbalanced Student achieves a significant performance improvement by distilling the knowledge from a more powerful student. Notably, the large ResNet 
improves the result from 15.74\% to 12.39\% on CIFAR-10 with 1k labels.

\ryn{Our} structure can also be combined with existing methods easily to \ryn{further improve the performance}. We replace the consistency constraint inside the model 
by Mean Teacher. Fig.\,\ref{fig:ablation} (left) \ryn{shows} the accuracy 
curves. The obvious \ryn{performance improvement shows} the ability of Dual Student in breaking the limits of the EMA teacher. The accuracy of the combination 
is similar to that using Dual Student only, which means that our method 
is insensitive to the type of consistency constraint inside each model.

\subsection{Ablation Experiments}
We conduct \ryn{the} ablation experiments on CIFAR-10 with 1k labels to 
analyze the impact 
of the confidence threshold and \yq{various} constraints in our structure.

{\bf Confidence threshold:} 
\ke{
The confidence threshold $\xi$ controls the 2\textsuperscript{nd} condition in Def.\,\ref{def:stable_samples} \daoye{of} the {\it stable sample} by filtering out samples near to the boundary. Its \ryn{actual value can be set approximately,} since our method is robust to it.
Typically, $\xi$ is related to the complexity of the task, e.g., the number of categories to predict or the size of the given dataset. More categories or \yq{a} smaller size would require a smaller $\xi$. Table\;\ref{tab:xi_values} compares \ryn{different $\xi$ values on the} CIFAR 
benchmark. The results \ryn{show} that $\xi$ is necessary for a better performance\ryn{, and
a meticulous tuning may only} help improve the performance slightly.
}


{\bf Effect of \ryn{the} constraints:}
Dual Student learns the unlabeled data by both $\mathcal{L}_{sta}$ 
between models and the $\mathcal{L}_{con}$ inside each model. We also study \ryn{their individual impacts}. Besides, we compare the results with 
the experiment where only  the consistency constraint is applied
between models 
(named $\mathcal{L}_{cs}$). Fig.\;\ref{fig:ablation} (right) shows 
that $\mathcal{L}_{cs}$ \ryn{reduces the} accuracy in the 
late stage while $\mathcal{L}_{sta}$ \ryn{helps improve the} performance continuously.
This demonstrates that our $\mathcal{L}_{sta}$ \ryn{is better than} $\mathcal{L}_{cs}$. \ryn{In addition,} $\mathcal{L}_{con}$ \ke{inside the model} also plays a 
role in boosting the performance further.

\begin{table}[t]
  \small
  \begin{center}
    \caption{\ke{Mean test error rate on the CIFAR benchmark averaged over 5 runs, with different  confidence threshold \ryn{values,} $\xi$.  
    \ryn{Parentheses show the numbers} of the labeled samples.}}
    \label{tab:xi_values}
    \vspace{-2mm}
    \begin{tabular}{lllll}
      \toprule 
      \textbf{Dataset (Labels)} & $\xi=0.0$ & $\xi=0.4$ & $\xi=0.6$ & $\xi=0.8$  \\
      \midrule
      CIFAR-10 \;\;(1k)  & $16.49$ & $16.12$ & $15.92$ & $\mathbf{15.74}$\\
      CIFAR-100 (10k)    & $33.67$ & $\mathbf{33.08}$& $33.23$ & $33.54$ \\
      \bottomrule
    \end{tabular}
  \end{center}
  \vspace{-5mm}
\end{table}

\subsection{Domain Adaptation}
Domain adaptation aims to transfer knowledge learned from a labeled dataset to an unlabeled one.
French \etalke\,\cite{MTDA} modified Mean Teacher and Temporal Model to enable domain adaptation
and showed the effectiveness of the Teacher-Student structure. 
\ke{In this section, we \ryn{apply Dual Student for adapting the digit recognition model from USPS to MNIST and show}
that it could be applied to this \yq{kind of task}
with great advantages over the EMA teacher \yq{based methods}. }

Both USPS and MNIST are greyscale hand-written \ryn{number} dataset. USPS consists of 7,000 images of  
$16\times16$, and MNIST contains 60,000 images of $28\times28$. To match the image resolution, 
we resize all images from USPS to $28\times28$ by cubic spline interpolation. Fig.\;\ref{fig:domain} 
shows the domain difference between the two datasets. 
In our experiments, we set USPS as the source domain and MNIST as the target domain. 
We compare our method with Mean Teacher, 
\yq{source domain (USPS) supervised model, and target domain (MNIST) supervised model}
(trained \ryn{on} 7k balanced labels). All experiments use a small architecture simplified 
from the above 13-layer CNN. More details \ryn{are available} in \ke{Appendix D (Supplementary)}.

\ryn{Fig.\;\ref{fig:transfer_curve} shows 
the test accuracy versus the number of epochs. We can see that naively using supervision from USPS would result in overfitting.} 
Mean Teacher avoids it to some extent 
and improves the top1 accuracy 
from 69.09\% to 80.41\%\ryn{, but} it overfits when the \ryn{number of training epochs} is large.
Our Dual Student avoids overfitting and \ryn{boosts the accuracy} to 91.50\%, which is much closer to the result obtained by 
supervision from the target domain. 

\begin{figure}[t]
    \setlength{\abovecaptionskip}{-0.2cm}
    \begin{center}
       \includegraphics[width=0.95\linewidth]{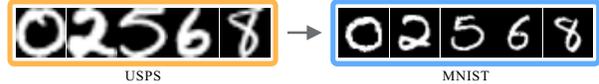}
    \end{center}
    \caption{Domain difference between USPS and MNIST. The \ryn{numbers} in USPS are in 
             bold font face and span all over the images without border.}
    \label{fig:domain}
\end{figure}

\begin{figure}[t]
    \setlength{\abovecaptionskip}{-0.2cm}
    \begin{center}
       \includegraphics[width=0.75\linewidth]{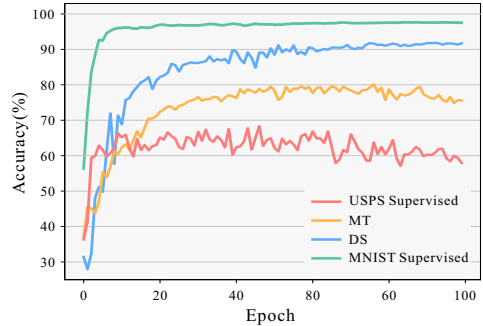}
    \end{center}
    \caption{Test curves of domain adaptation from USPS to MNIST \ryn{versus the number of} epochs. 
             Dual Student avoids overfitting and improves the result remarkably.}
    \label{fig:transfer_curve}
\end{figure}


\section{Conclusion}
In this paper, we have studied the coupling effect of the existing Teacher-Student 
methods and \ryn{shown} that it \yq{sets} a performance bottleneck \yq{for} the structure. 
We \ryn{have} proposed a new structure, Dual Student, to break limits of the EMA teacher\ryn{, and a novel stabilization constraint, which provides an effective way to train independent models (either with the same architecture or not)}. 
\ke{The stabilization constraint}
is bidirectional overall but is unidirectional for each {\it stable sample}. 
The improved performance is notable across datasets and tasks. Besides,
\ryn{we have also discussed two variants of Dual Student,} with even better results. \ke{However, our method still shares similar limitations as existing methods, \ryn{e.g., increased memory usage during training and performance degradation on increasing number of labels}.}
In the future, we \ryn{plan} \ke{to address these \yq{issues} and extend} our structure 
to other applications.

\section*{Appendix A: Convergence of the EMA}
In our paper, we state that the EMA teacher is coupled with the student in the existing 
Teacher-Student methods.
We provide below a formal proposition for this statement and a simple proof. 

\begin{proposition}
Given a sequence $\{\,s_t\,\}_{t \in \mathbb{N}} \subseteq \mathbb{R}^{m}$ 
and let $s'_t = \alpha\,s'_{t-1} + (1-\alpha)\,s_t$, where $0<\alpha<1$, $t \in \mathbb{N}$, $s'_0 \in \mathbb{R}^{m}$. 
If $\{\,s_t\,\}_{t \in \mathbb{N}}$ converges to $S \in \mathbb{R}^{m}$,
then $\{\,s'_t\,\}_{t \in \mathbb{N}}$ converges to $S$ as well.
\end{proposition}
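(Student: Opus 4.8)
The plan is to reduce everything to an error sequence and then to a single averaging estimate. First I would define the error $e_t := s'_t - S$ and the input deviation $d_t := s_t - S$, so that by hypothesis $d_t \to 0$. Since $\mathbb{R}^m$ is finite-dimensional I may fix any norm (say the Euclidean one) and it suffices to show $\|e_t\| \to 0$. Subtracting $S$ from both sides of the recursion and using $\alpha + (1-\alpha) = 1$ immediately yields the linear error recursion
\begin{equation}\label{eq:err_rec}
  e_t = \alpha\,e_{t-1} + (1-\alpha)\,d_t\,.
\end{equation}

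Next I would unroll \eqref{eq:err_rec} from step $t$ down to $0$ to obtain the closed form
\begin{equation}\label{eq:err_closed}
  e_t = \alpha^t e_0 + (1-\alpha)\sum_{k=1}^{t}\alpha^{t-k}\,d_k\,.
\end{equation}
The first term tends to $0$ because $0<\alpha<1$, so the whole proof reduces to showing that the convolution sum $w_t := (1-\alpha)\sum_{k=1}^{t}\alpha^{t-k}d_k$ converges to $0$. A useful preliminary observation is that the weights are nonnegative and satisfy $(1-\alpha)\sum_{k=1}^{t}\alpha^{t-k} = 1-\alpha^t \le 1$, so $w_t$ is a sub-convex combination of $d_1,\dots,d_t$; this is what makes a Toeplitz/Ces\`aro-style averaging argument apply.

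The core step is then a standard $\varepsilon$--$N$ split. Fixing $\varepsilon>0$, I would choose $N$ with $\|d_k\|<\varepsilon/2$ for all $k>N$, and split $w_t$ into a head $\sum_{k\le N}$ and a tail $\sum_{k>N}$. The tail is bounded by $\varepsilon/2$ using that the weights sum to at most $1$. For the head, factoring out $\alpha^{t-N}$ gives the bound $\alpha^{t-N}\,C$ with $C := (1-\alpha)\sum_{k=1}^{N}\alpha^{N-k}\|d_k\|$ a constant independent of $t$; hence the head falls below $\varepsilon/2$ once $t$ is large enough. Combining the two estimates with the decay of $\alpha^t e_0$ in \eqref{eq:err_closed} yields $\|e_t\|<\varepsilon$ for all large $t$, i.e. $s'_t \to S$.

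The hard part is really the head term, and it is where the averaging mechanism must be made honest: the finitely many deviations $\|d_k\|$ for $k\le N$ may be large, but because $N$ is frozen before $t$ is sent to infinity, each is multiplied by a geometrically decaying factor uniformly dominated by $\alpha^{t-N}$, so the fixed-mass head is annihilated by the kernel while the tail is uniformly small. One could instead invoke the Silverman--Toeplitz theorem for the triangular weight array $a_{t,k}=(1-\alpha)\alpha^{t-k}$ (checking row sums $\to 1$, column limits $\to 0$, and bounded absolute row sums), but the elementary split above is self-contained and keeps the argument short.
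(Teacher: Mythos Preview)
Your proof is correct and follows essentially the same route as the paper: unroll the recursion into the closed form $\alpha^t(\cdot)+(1-\alpha)\sum_{k}\alpha^{t-k}(\cdot)$, then perform an $\varepsilon$--$N$ head/tail split, using geometric decay to kill the head and convergence of the input to control the tail. Your substitution $e_t=s'_t-S$, $d_t=s_t-S$ streamlines the bookkeeping (the paper works with $s_i$ and $S$ separately and so carries an extra $\alpha^{t-T}S$ term), but the underlying argument is the same.
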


\begin{proof}
By the definition of convergence, if $\{\,s_t\,\}_{t \in \mathbb{N}}$ converges to $S$, we have:  $\forall \epsilon > 0$, $\exists T \in \mathbb{N}$ such that $\forall t > T$, $|s_t - S| < \epsilon$. 
First, when $t > T$, by the formula of the sum of a finite geometric series, we rewrite $S$ and $s'_t$ as:
\begin{equation}
\label{eq:1}
\begin{split}
S &= (1-\alpha)\,\frac{1-\alpha^{t-T}}{1-\alpha}\,S + \alpha^{t-T}\,S \\
  &= (1-\alpha)\sum_{i=T+1}^{t} \alpha^{t-i} S + \alpha^{t-T} S \,,\\
s'_t & = \alpha^{t}s'_0 +  (1-\alpha) \sum_{i=1}^{t}    \alpha^{t-i} s_{i} \\
    & = \alpha^{t}s'_0 + (1-\alpha)\sum_{i=1}^{T} \alpha^{t-i} s_{i}
     + (1-\alpha) \sum_{i=T+1}^{t} \alpha^{t-i} s_{i}\,.\\
\end{split}
\end{equation}
Since $T$ is finite, $\alpha^{T} s'_{0}$ and $\sum_{i=1}^{T} \alpha^{T-i} s_{i}$ are bounded. Thus, $\exists C\in \mathbb{R}^{+}$ such that: 
\begin{equation} \nonumber
|\alpha^{T} s'_0 + (1-\alpha)\sum_{i=1}^{T} \alpha^{T-i} s_{i}| < C\,.
\end{equation}
Since $0<\alpha<1$, we have $\lim_{t\to\infty}\alpha^t=0$. 
Thus, $\exists T'>0$ such that $\forall t>T', \alpha^{t} < min\{\,\frac{\epsilon}{C},\, \frac{\epsilon}{|S|}\,\}$. 
Then, after substituting Eq.\,\ref{eq:1} into $|s'_t-S|$ and applying the Triangular Inequality, we have: 
\begin{equation}
\label{eq:2}
\begin{split}
|s'_t-S| &\leq |\alpha^{t} s'_0 + (1-\alpha)\sum_{i=1}^{T} \alpha^{t-i} s_{i}| \\
         &+ |(1-\alpha) \sum_{i=T+1}^{t} \alpha^{t-i} (s_i - S)|
         + |\alpha^{t-T}S|\,. 
\end{split}
\end{equation}
Then $\forall t > (T+T')$, we have: 
%
\begin{equation}
\label{eq:3}
\begin{split}
     & |\alpha^{t} s'_0 + (1-\alpha)\sum_{i=1}^{T} \alpha^{t-i} s_{i}|  \\
     =\;&\alpha^{t-T}\,|\alpha^{T} s'_0 + (1-\alpha)\sum_{i=1}^{T} \alpha^{T-i} s_{i}|
     < \frac{\epsilon}{C}\,C < \epsilon\,, \\
\end{split}
\end{equation} 
\begin{equation}
\label{eq:4}
\begin{split}
     & |(1-\alpha) \sum_{i=T+1}^{t} \alpha^{t-i} (s_i - S)| \\
     \leq\;&(1-\alpha) \sum_{i=T+1}^{t} \alpha^{t-i} |s_i - S| = (1-\alpha^{t-T})\,\epsilon < \epsilon\,, \\
\end{split}
\end{equation} 
\begin{equation}
\label{eq:5}
     |\alpha^{t-T}S| < \frac{\epsilon}{|S|} |S| < \epsilon\,. \quad\quad\quad\quad\quad\quad\quad\quad\quad\;\,
\end{equation} 
Combining Eq.\;\ref{eq:2},\;\ref{eq:3},\;\ref{eq:4},\;\ref{eq:5}, we have $|s'_t - S| < 3\epsilon$, $\forall t > (T+T')$,
i.e., $\{s'_t\}_{y \in \mathbb{N}}$ converges to $S$.
\end{proof}

\section*{Appendix B: Model Architectures}
The model architecture used in our CIFAR-10, CIFAR-100, and SVHN experiments is the 
13-layer convolutional network (13-layer CNN), which is the same as previous works 
\cite{MeanTeacher, Temporal_Pi, FastSWA, SmoothNeighbor, DeepCoTrain}. We implement 
it following FastSWA\,\cite{FastSWA} for comparison. Table\,\ref{tab:13_cnn} describes 
its architecture in details. 
For ImageNet experiments, we use a 50-layer ResNeXt\,\cite{ResNeXt} architecture, which 
includes 3+4+6+3 residual blocks and uses the group convolution with 32 groups.

\section*{Appendix C: Semi-supervised Learning Setups}
In our work, all experiments use the SGD optimizer with the nesterov momentum set to $0.9$. 
The learning rate is adjusted by the function $\gamma = \gamma_0 * (0.5 + \cos((t - 1) * \pi / N))$, 
where $t$ is the current training step, $N$ is the total number of steps, and $\gamma_0$ is the initial learning 
rate. 
We present the settings of the experiments on each dataset as follows.

\textbf{CIFAR-10: } 
On CIFAR-10, we set the batch size to 100 and half of the samples in each batch are labeled. 
The initial learning rate is $0.1$. The weight decay is $1e^{-4}$. 
For the stabilization constraint, we set its coefficient $\lambda_{2} = 100$ 
and ramp it up in the first 5 epochs.
We set $\lambda_{1} = 10$.
The confidence threshold for the {\it stable samples} is $0.8$.

\textbf{CIFAR-100: } 
On CIFAR-100, each minibatch contains 128 samples, including 31 labeled samples. 
We set the initial learning rate to $0.2$ and the weight decay to $2e^{-4}$. 
The confidence threshold is $\xi = 0.4$. Other hyperparameters are the same as CIFAR-10.

\begin{table}[t]
  \begin{center}
    \caption{The 13-layer CNN for our SSL experiments.}
    \vspace{-0.2cm}

    \label{tab:13_cnn}
    \begin{tabular}{ll}
      \toprule 
      \textbf{Layer}        &\textbf{Details}                                             \\
      \midrule
      input                 & $32 \times 32 \times 3$ RGB image                           \\ 
      augmentation          & random translation, horizontal flip                         \\
      \hline
      convolution           & $128$, $3 \times 3$, pad = {\it same}, LReLU $\alpha$ = $0.1$    \\
      convolution           & $128$, $3 \times 3$, pad = {\it same}, LReLU $\alpha$ = $0.1$    \\
      convolution           & $128$, $3 \times 3$, pad = {\it same}, LReLU $\alpha$ = $0.1$    \\
      pooling               & $2 \times 2$, type = {\it maxpool}                          \\
      dropout               & $p$ = $0.5$                                                 \\

      convolution           & $256$, $3 \times 3$, pad = {\it same}, LReLU $\alpha$ = $0.1$    \\
      convolution           & $256$, $3 \times 3$, pad = {\it same}, LReLU $\alpha$ = $0.1$    \\
      convolution           & $256$, $3 \times 3$, pad = {\it same}, LReLU $\alpha$ = $0.1$    \\
      pooling               & $2 \times 2$, type = {\it maxpool}                          \\
      dropout               & $p$ = $0.5$                                                 \\
      
      convolution           & $512$, $3 \times 3$, pad = {\it valid}, LReLU $\alpha$ = $0.1$   \\
      convolution           & $256$, $1 \times 1$, LReLU $\alpha$ = $0.1$                      \\
      convolution           & $128$, $1 \times 1$, LReLU $\alpha$ = $0.1$                      \\
      pooling               & $6 \times 6 \Rightarrow 1 \times 1$, type = {\it avgpool}   \\
      dense                 & $128 \Rightarrow 10$, softmax                               \\
      \bottomrule
    \end{tabular}
  \end{center}
  \vspace{-0.5cm}
\end{table}

\begin{table}[t]
  \begin{center}
    \caption{The small CNN for domain adaptation.}
    \vspace{-0.2cm}

    \label{tab:3_cnn}
    \begin{tabular}{ll}
      \toprule 
      \textbf{Layer}        &\textbf{Details}                                               \\
      \midrule
      input                 & $28 \times 28 \times 1$ Gray image                            \\ 
      augmentation                 & gaussian noise $\zeta$ = $0.15$                              \\ 
      \hline
      convolution           & $16$, $3 \times 3$, pad = {\it same}, LReLU $\alpha$ = $0.1$  \\
      pooling               & $2 \times 2$, type = {\it maxpool}                            \\

      convolution           & $32$, $3 \times 3$, pad = {\it same}, LReLU $\alpha$ = $0.1$  \\
      pooling               & $2 \times 2$, type = {\it maxpool}                            \\
      dropout               & $p$ = $0.5$                                                   \\
      
      convolution           & $32$, $3 \times 3$, pad = {\it same}, LReLU $\alpha$ = $0.1$  \\
      pooling               & $6 \times 6 \Rightarrow 1 \times 1$, type = {\it avgpool}     \\
      dense                 & $32 \Rightarrow 10$, softmax                                  \\
      \bottomrule
    \end{tabular}
  \end{center}
  \vspace{-0.5cm}
\end{table}

\textbf{SVHN: } 
The batch size on SVHN is 100, and each minibatch contains only 10 labeled samples.
The initial learning rate is $0.1$, and the weight decay is $1e^{-4}$. 
The stabilization constraint is scaled by $10$ (ramp up in 5 epochs). 
We use the confidence threshold $\xi = 0.8$.

\textbf{ImageNet: } 
We validate our method on ImageNet by the ResNeXt-50 architecture on 8 GPUs with batch size $320$ and half of the batch are labeled samples. 
Each sample is augmented following \cite{SaE_Net} and is resized to $224 \times 224$.
We warm-up the learning rate from $0.08$ to $0.2$ in the first $2$ epochs. 
The model is trained for $60$ epochs with the weight decay set to $5e^{-5}$, 
the stabilization constraint coefficient set to $1000$, and a small confidence
threshold of $0.01$.


\section*{Appendix D: Domain Adaptation Setups}
We design a small convolutional network for the domain adaptation from USPS (source domain) 
to MNIST (target domain). 
The structure is shown in Table \ref{tab:3_cnn}. We train all experiments for 100 epochs 
by the SGD optimizer with the nesterov momentum set to $0.9$ and the weight decay set to $1e^{-4}$. 
The learning rate declines from $0.1$ to $0$ by a cosine adjustment.
Each batch includes 256 samples while 32 of them are labeled. We randomly extract 
7000 balanced samples from MNIST for target-supervised experiments, and other experiments 
are done by using the training set of USPS.
The coefficient of the stabilization constraint is $\lambda_{2} = 1.0$. We also ramp it up in 
the first 5 epochs. 
The confidence threshold is $\xi = 0.6$. We discover that the input noise with 
$\zeta = 0.15$ is vital for the Mean Teacher but not for our method in this experiment.

{\small
\bibliographystyle{ieee_fullname}
\bibliography{egbib}
}

\end{document}